\renewcommand\@formatdoi[1]{\ignorespaces}
\newcolumntype{d}[1]{D{.}{.}{#1}}
\begin{document}
\title[TrajectoryNet]{TrajectoryNet: An Embedded GPS Trajectory Representation for Point-based Classification Using Recurrent Neural Networks}

\author{Xiang Jiang}
\email{xiang.jiang@dal.ca}
\author{Erico N de Souza}
\email{erico.souza@dal.ca}
\author{Ahmad Pesaranghader}
\email{ahmad.pgh@dal.ca}
\author{Baifan Hu}
\email{baifanhu@dal.ca}
\affiliation{%
  \institution{Dalhousie University}
  \streetaddress{6299 South St}
  \city{Halifax} 
  \state{NS}
  \country{Canada}
  \postcode{B3H 4R2}
}

\author{Daniel L. Silver}
\affiliation{%
  \institution{Acadia University}
  \streetaddress{15 University Ave}
  \city{Wolfville} 
  \state{NS} 
  \country{Canada}
  \postcode{B4P 2R6}
}
\email{danny.silver@acadiau.ca}

\author{Stan Matwin}
\affiliation{%
  \institution{Dalhousie University}
  \streetaddress{6299 South St}
  \city{Halifax} 
  \state{NS}
  \country{Canada}
  \postcode{B3H 4R2}\\
  \institution{Polish Academy of Sciences}
  \city{Warsaw}
  \country{Poland}
  }
\email{stan@cs.dal.ca}

\renewcommand{\shortauthors}{X. Jiang et al.}

\begin{abstract}
Understanding and discovering knowledge from GPS (Global Positioning System) traces of human activities is an essential topic in mobility-based urban computing.
We propose TrajectoryNet---a neural network architecture for point-based trajectory classification to infer real world human transportation modes from GPS traces.
To overcome the challenge of capturing the underlying latent factors in the low-dimensional and heterogeneous feature space imposed by GPS data, we develop a novel representation that embeds the original feature space into another space that can be understood as a form of basis expansion.
We also enrich the feature space via segment-based information and use Maxout activations to improve the predictive power of Recurrent Neural Networks (RNNs).
We achieve over 98\% classification accuracy when detecting four types of transportation modes, outperforming existing models without additional sensory data or location-based prior knowledge.
\end{abstract}

%
%
\begin{CCSXML}
<concept>
<concept_id>10010147.10010257.10010293.10010294</concept_id>
<concept_desc>Computing methodologies~Neural networks</concept_desc>
<concept_significance>500</concept_significance>
</concept>
<concept>
<concept_id>10003752.10010070.10010071.10010085</concept_id>
<concept_desc>Theory of computation~Structured prediction</concept_desc>
<concept_significance>500</concept_significance>
</concept>
<concept>
<concept_id>10010405.10010481.10010485</concept_id>
<concept_desc>Applied computing~Transportation</concept_desc>
<concept_significance>500</concept_significance>
</concept>
\end{CCSXML}

\ccsdesc[500]{Computing methodologies~Neural networks}
\ccsdesc[500]{Theory of computation~Structured prediction}
\ccsdesc[500]{Applied computing~Transportation}

\keywords{GPS, trajectory classification, recurrent neural networks, embedding}

\maketitle

\section{Introduction}
The advent of ubiquitous location-acquisition technologies, such as GPS and AIS (Automatic Identification System), has enabled massive collection of spatiotemporal trajectory data.
Understanding and discovering knowledge from GPS and AIS data allows us to draw a global picture of human activities and improve our relationship with the planet earth.
Among their many applications, trajectory data mining algorithms search for patterns to cluster, forecast or classify a variety of moving objects, including animals, human, cars, and vessels \cite{x585_giannotti2007trajectory,x586_zheng2015trajectory,x622_zheng2014urban,x644_batty2012smart,x645_kisilevich2010spatio,x646_nanni2006time,x793_loglisci2014mining,x792_dodge2009revealing}.
Such applications include time series forecasting tasks such as predicting the flow of crowds \cite{x138_zhang2016dnn,x555_zhang2016deep} and time series classification tasks such as detecting human transportation modes \cite{x23_zheng2008understanding} and  fishing activities \cite{x201_de2016improving,x252_FedCSIS2016546,jiang2017improving}.
These applications allow us to improve traffic management, public safety, and environmental sustainability.
In this paper, we investigate the transportation mode detection task using Recurrent Neural Networks (RNNs) that classify GPS traces into four classes (i.e., bike, car, walk and bus).

The reason why neural networks \cite{x614_byon2009real,x613_gonzalez2010automating} fail to achieve highly accurate models on this task is due to the difficulty of developing hierarchies of feature compositions in low-dimensional and heterogeneous feature space.
To address these issues, we extend our previous work \cite{jiang2017improving} on RNNs beyond ocean data to detect human transportation modes from GPS traces.
There is a significant novelty over \cite{jiang2017improving}: an in-depth analysis of the embedding method and a thorough investigation of its connections to basis expansion, piecewise function and discretization. Also, for the first time, we put forth the positive use of Maxout in GRUs as universal approximators.

We propose the TrajectoryNet method that achieves state-of-the-art performance on real world GPS transportation mode classification tasks.
The proposed TrajectoryNet differs with existing methods in that it uses embedding of GPS data, for the first time, to map the low-dimensional and heterogeneous feature space into distributed vector representations to capture the high-level semantics\footnote{Note that the term ``semantic'' in this paper refers to meaningful representations of the data, rather than geo-objects such as roads and places of interest.}.
The embedding can be viewed as a form of basis expansion that improves feature representation in the way that even a linear decision boundary in the embedding space can be mapped down to a highly nonlinear function in the original feature space.
We also employ segment-based information and Maxout activations \cite{x40_goodfellow2013maxout} to improve the predictive power of RNNs.
The TrajectoryNet achieves over 98\% and 97\% classification accuracy when detecting 4 and 7 types of transportation modes.

The rest of this paper is structured as follows:
In Section \ref{section:preliminaries} we provide definitions about trajectory data mining.
We also introduce RNNs especially Gated Recurrent Units (GRU) that will be used in this paper.
In Section \ref{section:trjnet}, we introduce the framework of the proposed model---TrajectoryNet.
We detail the segmentation method that defines the neighbourhood, embedding method and the Maxout GRU classification model.
We highlight the relationship between embedding and discretization in neural networks and provide intuitive justifications about the need of embedding for continuous features.
In Section \ref{section:exp}, we detail experiment settings and discussions on the experimental results.
We summarize the conclusion and future work in Section \ref{section:conclusion}.

\section{Preliminaries}
\label{section:preliminaries}
\subsection{Definitions}

\begin{definition}
\textup{A \textit{trajectory} \cite{x625_devogele2012mobility} is a part of movement of an object that is delimited by a given time interval $[t_{Begin}, t_{End}]$. It is a continuous function from time to space.} 
\end{definition}

 \begin{definition}
\textup{A \textit{discrete representation} \cite{x625_devogele2012mobility,x585_giannotti2007trajectory} of a trajectory is made up of a sequence of triples $S=\left \langle \left ( x_{0}, y_{0},t_{0} \right ),\dots, ( x_{k}, y_{k},t_{k} \right )  \rangle$ that represents spatio-temporal positions of the trajectory, but not providing the continuity of the movement of the object. Here $ ( x_{i}, y_{i})$ denotes the spatial coordinate at time $t_{i}$.
}
\end{definition}
The discrete representation is due to the sampling nature of location-acquisition technologies where the trajectory data are sampled at discrete timestamps.
\begin{definition}
\textup{\textit{Point-based classification} of a trajectory is the practice of learning a one-to-one mapping $S\rightarrow M$ that maps a sequence of discrete trajectory data $S=\left \langle \left ( x_{0}, y_{0},t_{0} \right ),\dots, \left ( x_{k}, y_{k},t_{k} \right )  \right \rangle$ to a corresponding sequence of labels $M=\left \langle m_{0}, \dots, m_{k} \right \rangle$ where $m_{i}$ denotes the class label of triple $( x_{i}, y_{i},t_{i})$.}
\end{definition}
 \begin{definition}
\textup{\textit{Segmentation} \cite{x585_giannotti2007trajectory} of a trajectory is to divide a trajectory into disjoint segments with some criteria such as time interval, trajectory shape or semantics that can provide richer knowledge from trajectory data.
}
\end{definition}

\begin{definition}
\textup{\textit{Discretization} of continuous features \cite{x304_dougherty1995supervised,x57_kotsiantis2006discretization,x58_garcia2013survey,x426_ali2015rough} divides the domain of the continuous attribute $D\in[l, u]$ into a set of intervals using $n$ cut-points represented by $C=\left ( c_{1}, c_{2}, \dots, c_{n} \right )$ where $c_{1} < c_{2}< \dots < c_{n}$.
The domain $D$ is divided into disjoint intervals $[l, c_{1})\cup [c_{1}, c_{2}) \cup \dots\cup [c_{n}, u]$ where $l$, $u$ are the lower and upper bounds of this attribute.
}
\end{definition}

\subsection{Recurrent Neural Networks}
RNN is a powerful model for learning from sequential data.
GPS trajectories are a type of spatiotemporal data that naturally fits into the framework of RNNs.
Unlike standard feedforward neural networks, RNNs use recurrent connections to retain the state information between different time steps.
Long short-term memory networks (LSTMs) are introduced to overcome the optimization challenges in RNNs \cite{x60_bengio1994learning,x208_hochreiter1991untersuchungen,x204_hochreiter1998vanishing} with the use of a sophisticated network structure that selectively passes information at different time steps.
There is a rich family of LSTM architectures \cite{x19_greff2015lstm,x26_chung2014empirical}, and our recent work \cite{jiang2017improving} suggests that Gated Recurrent Units (GRU) \cite{x21_cho2014learning} are better suited for point-based trajectory classification.
\textit{Gated Recurrent Units} \cite{x21_cho2014learning} defined in Equation (\ref{formula-gru}) are  a variant of LSTM.
\begin{equation}
\begin{split}
\begin{pmatrix} r_{t}\\ z_{t} \end{pmatrix} {}&= \mathrm{\sigma}
\left ( U_{g}x_{t}+  W_{g} h_{t-1} \right ) \\
\widetilde{h_{t}}{}&=\mathrm{tanh} \left (Ux_{t}+W_{c}\left ( r_{t}\odot h_{t-1} \right ) \right )\\
h_{t}{}&=(1-z_{t})\odot h_{t-1}+z_{t}\odot\widetilde{h_{t}},
\label{formula-gru}
\end{split}
\end{equation}
where $r_{t}$, $z_{t}$ are the reset and update gates to learn short and long-term memories, $\widetilde{h_{t}}$ and $h_{t}$ are candidate and final cell states at $t$, $U$ and $W$ are input-to-hidden and recurrent connections and $\odot$ denotes element-wise multiplication.
Compared with LSTMs, this results in a simplified architecture with fewer parameters that are easy to train.
This paper extends GRUs by introducing the Maxout activation function to learn more expressive memory states.


\section{Method: TrajectoryNet}
\label{section:trjnet}
Figure \ref{fig:framework} shows the framework of the proposed TrajectoryNet\footnote{The source code is available at: \url{https://github.com/xiangdal/TrajectoryNet}.}.
The GPS records are divided into segments followed by extraction of point-and-segment-based features.
It then discretizes the continuous features, embeds them into another space followed by Maxout GRUs described in Section \ref{section:maxoutGRU} for classification.

\begin{figure}[h]%
\center
    \includegraphics[width=0.48\textwidth]{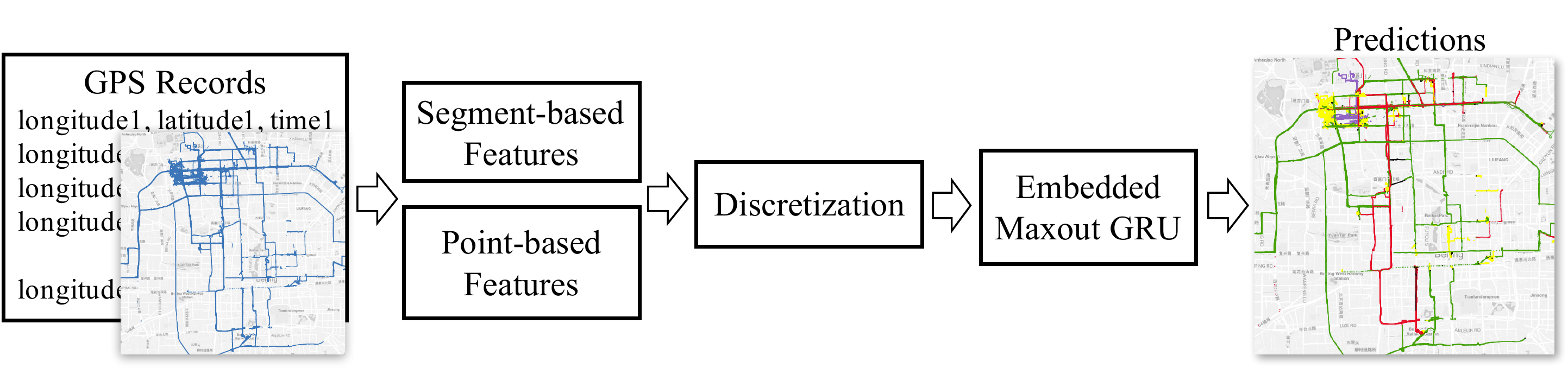}
    \caption{Framework of the proposed TrajectoryNet}%
    \label{fig:framework}
\end{figure}

\subsection{Segmentation: From Point-based Features to Segment Features}
Many transportation mode detection algorithms divide a trajectory into segments of single-mode trips and assign a single transportation mode to each segment.
Zheng et al. \cite{x23_zheng2008understanding} uses walking as the dividing criteria based on the assumption that ``people must stop and walk when changing transportation modes'' \cite{x23_zheng2008understanding}.
However, this assumption has one major practical limitation: the absence of the walking segment due to uncertainties in the sampling process could merge a trajectory that has two distinct transportation modes into a single segment.
This harms model performance by assigning a single label to this segment that contains two classes.
To address this problem, we assign transportation modes based on each discrete GPS sample instead of individual segments.
This method enriches the point-based features with segment-based features while preventing the misclassifications incurred by imprecise segmentation.

Various methods can be used for segmenting GPS trajectories, including \textit{transition}-based method \cite{x23_zheng2008understanding} that uses walking to divide trajectories, \textit{clustering}-based method \cite{x677_lee2007trajectory,x678_soares2015grasp} that measures the similarities of sub-trajectories, \textit{time}-based method \cite{x620_stenneth2011transportation} that uses equal time interval in each segment, \textit{distance}-based method \cite{x618_liao2006location} that uses equal distance traveled in each segment, \textit{bearing}-based method \cite{x270_erico_human} that measures changes in bearing orientations and \textit{window}-based method \cite{x616_miluzzo2008sensing} that has the same number of GPS samples in each segment.
This paper empirically evaluates the last four methods for their simplicity in implementation.

\subsection{Embedding: From Feature Space to Semantic Space}
\label{section:method-embedding}
\subsubsection{Motivation.}
In natural language processing, embedding is the process of converting nominal features, i.e., words, into continuous and distributed vector representations.
Compared with local representations, distributed representations have better non-local generalization \cite{x37_bengio2009learning}, are more efficient and can encode linguistic regularities and semantics.

In the context of transportation mode detection, continuous features---such as speed---are limited by their ability to capture various semantics in different applications.
The continuous features can be viewed as observations derived from underlying latent factors that carry distinct semantics in different applications.
Take speed as an example, 10km/h is ``fast'' for a running person but ``slow'' for a motorist---the same speed can take opposite ends of the spectrum depending on the context.
It is not the value of the continuous features that matters, what is more important is how we interpret the meaning.
To this end, it is desirable to develop representations that can explain the semantics of the continuous features.
In other words, we are interested in converting continuous features into a vector representation that corresponds to their meaning.

\subsubsection{The smoothness prior.}
One challenge facing the development of embeddings is that, unlike nominal features such as words, there are infinite possible values for continuous features.
To address this NP-hard problem \cite{x632_rousu2003optimal}, we use the concept of \emph{smoothness prior} that helps define the way in which we embed continuous features.
The task of transportation mode detection exhibits the property that physical attributes (e.g. speed and acceleration) behave continuously, i.e., they generally do not change abruptly in a space or time neighborhood and present some coherence.
We introduce a \emph{smoothness prior} assumption: around the value of a particular continuous attribute, e.g. speed, its semantics are more or less coherent, and changes in transportation modes do not occur all of a sudden.

\begin{theorem}
\label{theorem}
Let $F$ be a continuous conditional cumulative distribution function of a discrete (categorical) variable $Y$ given a continuous random variable $X\in[l, u]$ where $l$ and $u$ are the lower and upper bounds of $X$.
For each $\epsilon > 0$ there exists a finite partition $l\leq c_{1}<c_{2}< \dots < c_{n}\leq u$ of $[l, u]$ for $i=0,1,\dots,n-1$ such that $F(Y|c_{i+1}^{-})-F(Y|c_{i})\leq \epsilon$.
\end{theorem}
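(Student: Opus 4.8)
The plan is to recognize Theorem~\ref{theorem} as a disguised restatement of the \emph{uniform continuity} of the conditional distribution in its conditioning argument, so that the required partition is obtained simply by subdividing $[l,u]$ finely enough. First I would pin down the interpretation of the hypothesis. Although $F(y\mid x)=P(Y\leq y\mid X=x)$ is, for fixed $x$, a step function in the (ordered) categorical argument $y$, the continuity assumption is about the conditioning variable: for each fixed value $y$ of $Y$ the scalar map $x\mapsto F(y\mid x)$ is continuous on the closed bounded interval $[l,u]$. This is precisely the reading that makes the left-limit notation $c_{i+1}^{-}$ meaningful, since it denotes the limit in $x$, and continuity in $x$ will let me identify $F(Y\mid c_{i+1}^{-})$ with $F(Y\mid c_{i+1})$.

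Next I would invoke the Heine--Cantor theorem: a function continuous on a compact set is uniformly continuous there. Applied to $x\mapsto F(y\mid x)$ on $[l,u]$, this yields, for the given $\epsilon>0$, a $\delta>0$ such that $|x-x'|<\delta$ implies $|F(y\mid x)-F(y\mid x')|\leq\epsilon$. Because $Y$ ranges over only finitely many categories, I can take a single $\delta$ valid simultaneously for every category by choosing the minimum of the finitely many category-specific values; this is the one place where finiteness of the label set is used.

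With $\delta$ in hand I would construct the partition explicitly: pick an integer $n$ with $(u-l)/n<\delta$ and set $c_i=l+i(u-l)/n$, so that $l=c_0<c_1<\dots<c_n=u$ (taking $c_0=l$ fixes the minor off-by-one in the indexing $i=0,\dots,n-1$), with every subinterval of width strictly less than $\delta$. Then for each $i$ the gap satisfies $c_{i+1}-c_i<\delta$, continuity in $x$ gives $F(Y\mid c_{i+1}^{-})=F(Y\mid c_{i+1})$, and uniform continuity yields $F(Y\mid c_{i+1}^{-})-F(Y\mid c_i)\leq|F(Y\mid c_{i+1})-F(Y\mid c_i)|\leq\epsilon$, which is exactly the claimed bound; the one-sided inequality in the statement follows a fortiori from the two-sided estimate.

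The only real difficulty here is conceptual rather than technical: the statement looks distributional but, once unpacked, is nothing more than uniform continuity on a compact interval followed by a uniform subdivision. I would therefore devote most of the write-up to fixing the interpretation of $F$ and of $c_{i+1}^{-}$ (continuity in the conditioning variable, step behaviour in the label) and to checking that a common $\delta$ survives across the finitely many categories; the partition and the final estimate are then routine. I would deliberately avoid a greedy alternative that advances $c_{i+1}$ until the conditional distribution has moved by $\epsilon$, because $x\mapsto F(y\mid x)$ need not be monotone in $x$, so the usual ``total mass at most $1$, hence at most $1/\epsilon$ steps'' termination argument does not apply; routing through uniform continuity sidesteps this entirely.
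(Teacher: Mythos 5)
Your proof is correct under the natural reading of the hypothesis (continuity of $x\mapsto F(y\mid x)$ in the conditioning variable), but it takes a genuinely different route from the paper. The paper adapts the classical lemma from the Glivenko--Cantelli proof: it constructs the cut-points greedily via $c_{j+1}=\sup\{z: F(Y\mid z)\leq F(Y\mid c_j)+\epsilon\}$, argues by contradiction using right-continuity that $F(Y\mid c_{j+1})=F(Y\mid c_j)+\epsilon$, and then bounds $F(Y\mid c_{j+1}^{-})$ by the defining property of the supremum. You instead invoke Heine--Cantor on the compact interval $[l,u]$, take a common modulus of continuity over the finitely many categories, and lay down a uniform mesh finer than $\delta$. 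The trade-offs are real: the paper's greedy argument needs only right-continuity and produces an economical partition, but---as you correctly observe in your closing remark---it is modelled on a lemma for monotone CDFs, and when $x\mapsto F(Y\mid x)$ is not monotone the sup-set need not be an interval, the identity $F(Y\mid c_{j+1})=F(Y\mid c_j)+\epsilon$ can fail, and the paper never establishes that the construction terminates in finitely many steps (the usual ``at most $1/\epsilon$ steps since total mass is $1$'' count also relies on monotonicity). Your route pays for its robustness by assuming two-sided continuity (so that $F(Y\mid c_{i+1}^{-})=F(Y\mid c_{i+1})$) and by using finiteness of the label set to extract a single $\delta$, but it delivers finiteness of the partition for free and is insensitive to non-monotonicity. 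In short: same theorem, but you prove it by uniform continuity plus equal subdivision where the paper proves it by the greedy sup-construction, and your version is arguably the more watertight of the two for the non-monotone conditional setting the theorem actually lives in.
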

\begin{proof}
Let $\epsilon > 0$, $c_{0}=l$ and for $i\geq0$ define
$$c_{j+1}=sup \left \{ z: F(Y|z) \leq F(Y|c_{j}) + \epsilon \right \}.$$
We first prove $F(Y|c_{j+1})=F(Y|c_{j}) + \epsilon$.
We can prove $F(Y|c_{j+1})\geq F(Y|c_{j}) + \epsilon$ by contradiction, and by definition $F(Y|c_{j+1}) \leq F(Y|c_{j}) + \epsilon$, thus we conclude $F(Y|c_{j+1})=F(Y|c_{j}) + \epsilon$.
To prove $F(Y|c_{j+1})\geq F(Y|c_{j}) + \epsilon$ by contradiction, suppose $F(Y|c_{j+1})< F(Y|c_{j}) + \epsilon$, by right continuity of the conditional cumulative distribution function $F$, within the neighbourhood of $c_{j+1}$ of radius delta $\delta >0$ there exists $F(Y|c_{j+1}+\delta)< F(Y|c_{j}) + \epsilon$, which contradicts with the definition of $c_{j+1}$.
Thus, $F(Y|c_{j+1})=F(Y|c_{j}) + \epsilon$.
Next we prove $F(Y|c_{i+1}^{-})-F(Y|c_{i})\leq \epsilon$.
By definition $F(Y|c_{j+1}^{-})\leq F(Y|c_{j+1}-\delta)$ for $\delta >0$.
By definition of $c_{j+1}=sup \left \{ z: F(Y|z) \leq F(Y|c_{j}) + \epsilon \right \}$, we have $F(Y|c_{j+1}-\delta) \leq F(Y|c_{j}) + \epsilon$, which gives $F(Y|c_{j+1}^{-})\leq F(Y|c_{j+1}-\delta)\leq F(Y|c_{j}) + \epsilon$.
This completes our proof that $F(Y|c_{i+1}^{-})-F(Y|c_{i})\leq \epsilon$.
\end{proof}
Theorem \ref{theorem} justifies the smoothness prior by stating that there exists a partition, or discretization, of the feature space of random variable $X$, such that the changes in conditional cumulative distribution $F(Y|c_{i+1}^{-})-F(Y|c_{i})$ within each interval is arbitrarily small.
More concretely, in the domain of transportation mode detection, if the cumulative probability of walking $y$ given speed $v$ is $F(y|v)$, there exists a discretization such that within the speed interval $v\in[c_{i},c_{i+1})$ defined by this discretization, the cumulative probability of walking is more or less coherent, and changes in the transportation mode do not occur suddenly.
The proof of this Theorem is based on Lemma 1.1 \cite{x676_gc_theorem} in the proof of the Glivenko-Cantelli Theorem \cite{x675_vapnik2015uniform}.

The smoothness prior allows us to embed continuous features by discretizing them into intervals and embed the discretized attributes instead. 
We only discriminate among different intervals and there is no constraint that different parameterizations are required within the same interval.
Discretization has been commonly used in density estimation such as density estimation trees \cite{x676_ram2011density} that use piecewise constant function to estimate probability distributions.
It has also been used in data mining algorithms, such as C4.5 \cite{x628_quinlan2014c4} and Naive Bayes \cite{x629_yang2009discretization}.
From a Bayesian point of view, discretization allows us to use $P(Y=y|X=x^{*})$ to estimate $P(Y=y|X=x)$ where $x^{*}$ is the discretized version of input $x$ and $y$ is the label.
Yang et al. \cite{x629_yang2009discretization} shows that ``discretization is equivalent to using the true probability density function'' in the naive Bayes framework and it is empirically better to use discretization instead of unsafe parametric assumptions of the distribution.
Moreover, discretization strengthens parameter estimation through the law of large numbers where more samples are available for each interval compared with infinite values of continuous features.
In the context of point-based trajectory classification, it is difficult to ``disentangle factors of variations in the data'' \cite{x37_bengio2009learning} by developing feature compositions of the low-dimensional and dense feature space.
Thus it is reasonable to discretize and embed continuous features into vector representations for better classification.

\subsubsection{The discretization trick.}
\begin{figure}[h]%
    \centering
    \subfloat[Embedding via matrix operation]{{\includegraphics[width=6cm]{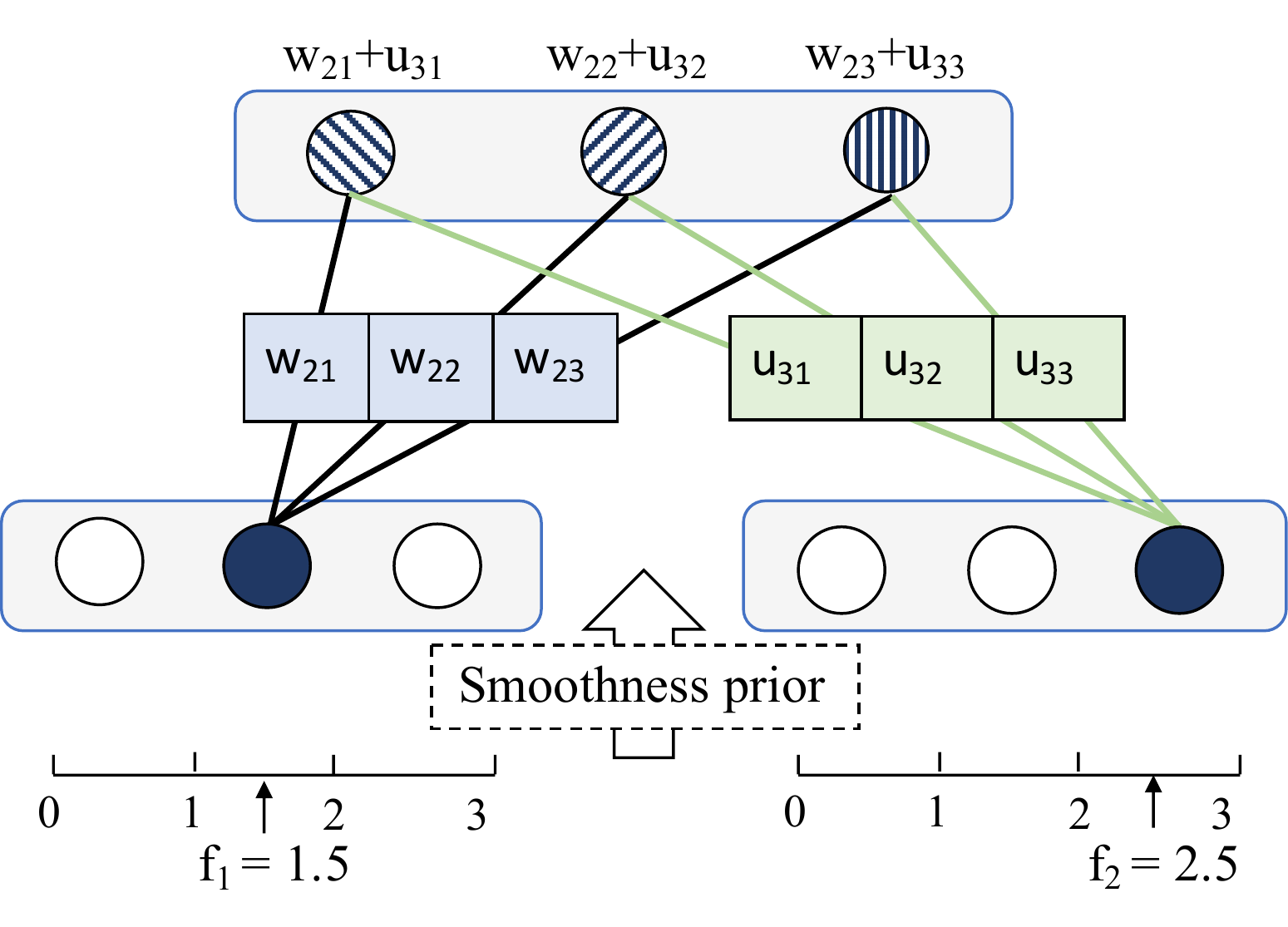} \label{fig:discretization_trick_a} }}%
    \qquad
    \hspace*{-2.5em}
    \subfloat[Embedding via lookup table]{{\includegraphics[width=7cm]{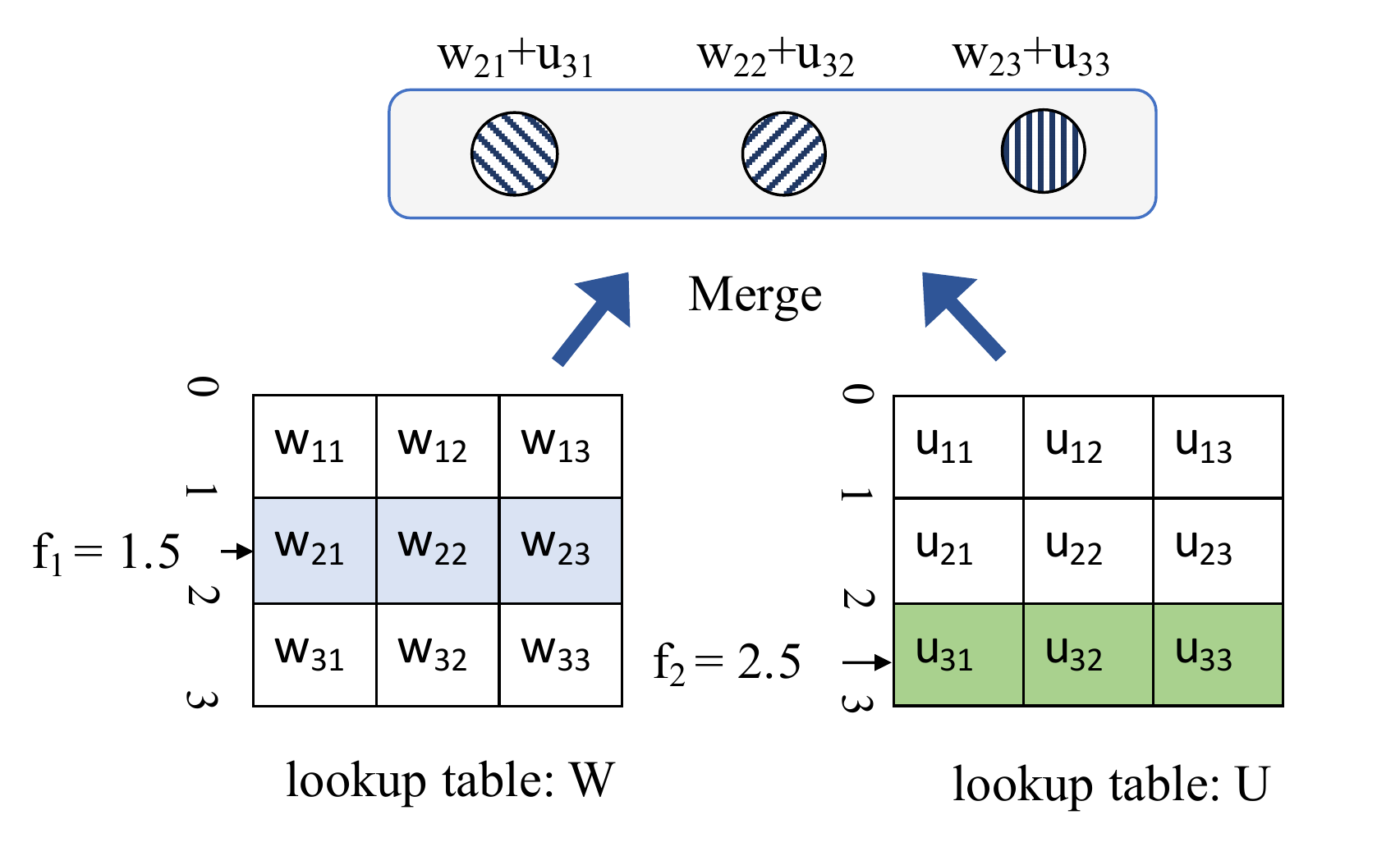} }\label{fig:discretization_trick_b}}%
    \caption{Two equivalent embedding implementations}%
\end{figure}
We introduce the ``discretization trick'' that maps discretized features into the embedding space by matrix multiplication shown in Figure \ref{fig:discretization_trick_a}.
The algorithm first converts continuous features into a one-hot vector that indicates the interval in which a feature lies in: the first feature $f_{1}=1.5$ is converted into $i=[0,1,0]$ and the second feature $f_{2}=2.5$ is converted into $j=[0,0,1]$.
The one-hot vectors are then transformed into the embedding space via matrix multiplication $[i,j]\times\begin{bmatrix}W\\U\end{bmatrix}=iW+jU=w_{e}+u_{e}$, where $w_{e}=[W_{21},W_{22},W_{23}]$ and $u_{e}=[U_{31},U_{32},U_{33}]$.
This method is equivalent with the embedding lookup approach, shown in Figure \ref{fig:discretization_trick_b}, that first defines embedding lookup tables $W$ and $U$, selects the row vectors ($w_{e}$ and $u_{e}$) based on the input values and combines them through element-wise addition.
We use the former approach in our experiments as it provides a simple means of embedding by direct matrix multiplication.
This embedding is learned through back-propagation.

\subsubsection{Discretization strategies.}
Meaningful splits are required to provide sufficient predictive power for the semantic embedding.
Three discretization strategies are studied in this paper, namely, equal-width binning, Recursive Minimal Entropy Partitioning (RMEP) and fuzzy discretization.
\emph{Equal-width binning} determines the range of each feature and then divides this range with equal-width intervals.
This method is unsupervised and straightforward to implement.
\emph{RMEP} \cite{x304_dougherty1995supervised} uses Shannon Entropy \cite{x288_shannon2001mathematical} to measure the impurity of labels within each partition and recursively partitions each feature using information gain.
RMEP improves the supervised predictive power of the resulting intervals.
The recursion terminates when a specified number of bins has been reached, or according to Minimal description length principle \cite{x289_rissanen1978modeling}.
\emph{Fuzzy discretization} aims to improve results in overlapping data by allowing a continuous value to belong to different intervals in a soft way with some (trapezoidal) membership functions \cite{x627_roy2003fuzzy}.
\subsubsection{Intuitive justifications.}
To conclude the discussions on embedding of the feature space, we highlight its beneficial characteristics.
Instead of learning layers of neural networks directly on the low-dimensional and heterogeneous feature space with shared parameters for various candidate values of each feature, embedding develops better feature representation by mapping the original features to another space that uses different embedding vectors to represent different values of each feature.
This bears some resemblance to the kernel trick. However,
adaptive piecewise basis expansions on the feature space is achieved here without explicit design of kernel functions.
Whereas the row picture of the embedding matrix represents the process of converting a continuous value into a vector, the column picture of the embedding table can be understood as a type of basis expansion.
Table \ref{fig:embedding_table} shows a sample embedding matrix that maps an attribute $D\in[0,6]$ into a 3-dimensional embedding space.
As shown in Figure \ref{fig:embedding_basis_expansion}, each embedding dimension can be viewed as a basis expansion that detects different patterns from the input space---the first dimension detects values in $[2,4]$, the second dimension is a quadratic transformation and the last dimension is an identity approximation.
Each basis expansion can be viewed as an expert that specializes in detecting different input-output relationships.
The output of the model can be interpreted as a mixture of experts \cite{x647_jacobs1997bias} that covers different regions of the decision boundary with different nonlinear functions.

\begin{figure}%
    \centering
    \subfloat[Embeddings]{{\includegraphics[width=2.7cm]{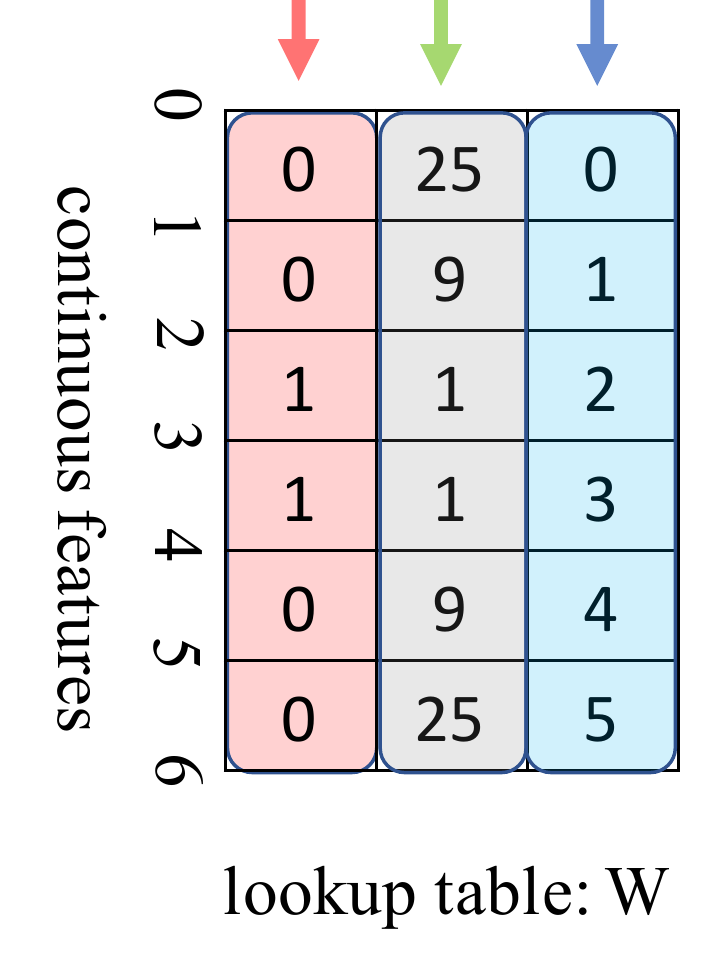} \label{fig:embedding_table} }}%
    \qquad
    \hspace*{-2.5em}
    \subfloat[Embeddings as basis expansions]{{\includegraphics[width=8cm]{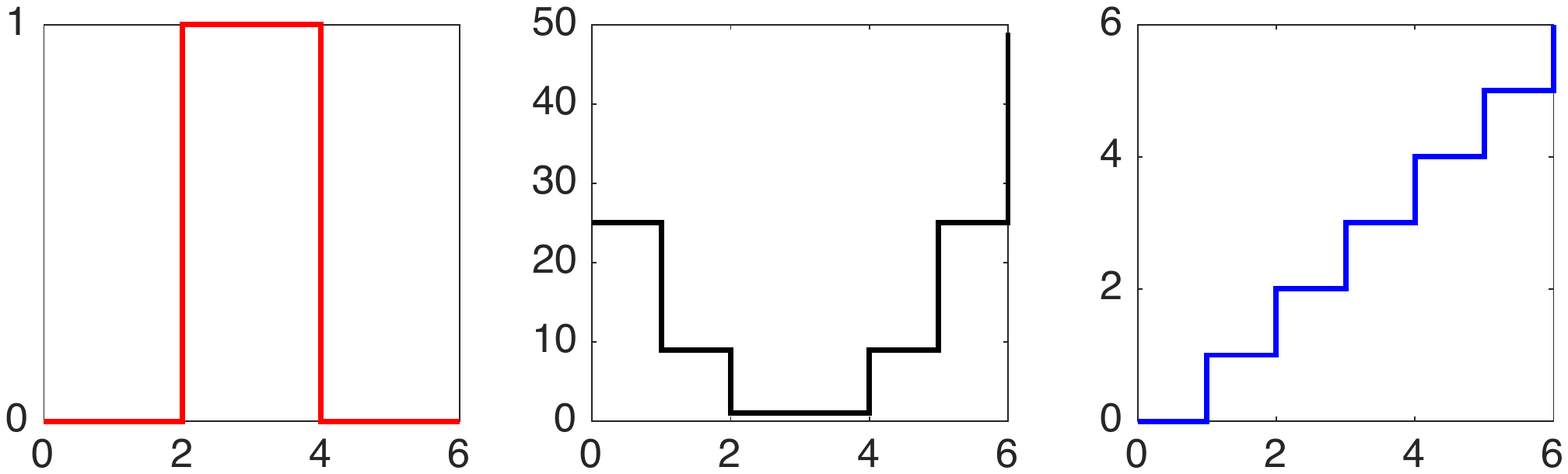} }\label{fig:embedding_basis_expansion}}%
    \caption{Basis expansion: the column picture of the embedding lookup table}%
\end{figure}

\subsection{Maxout Gated Recurrent Units}
\label{section:maxoutGRU}
We propose Maxout GRU that applies Maxout activation \cite{x40_goodfellow2013maxout} instead of hyperbolic tangent (tanh) when calculating the candidate state $\widetilde{h_{t}}$.
Maxout activation takes the maximum of a set of linear transformations resulting in an adaptive convex function.
It is more expressive than tanh because only two maxout hidden units can approximate any continuous function arbitrarily well \cite{x40_goodfellow2013maxout}.
It also has better gradient properties.
We define Maxout GRU in (\ref{formula-maxoutlstm}) that takes ``expressive'' to the next level---in addition to the learned embeddings, we also achieve more flexible memory states.
In Equation \ref{formula-maxoutlstm}, $j$ denotes the index of piecewise linear transformations, $k$ is the total number of piecewise transformations and the rest of the notations are the same with Equation \ref{formula-gru}.
\begin{equation}
\begin{split}
\begin{pmatrix} r_{t}\\ z_{t} \end{pmatrix} {}&= \mathrm{\sigma}
\left ( U_{g}x_{t}+  W_{g} h_{t-1} \right ) \\
\underset{j\in[1,k]}{\widetilde{h_{tj}}}{}{}&=U_{j}x_{t}+W_{cj}\left ( r_{t}\odot h_{t-1} \right )\\
\widetilde{h_{t}}{}&=\underset{j\in[1,k]}{\mathrm{max}} \left (\widetilde{h_{tj}} \right ) \\\
h_{t}{}&=(1-z_{t})\odot h_{t-1}+z_{t}\odot\widetilde{h_{t}}.
\label{formula-maxoutlstm}
\end{split}
\end{equation}
We use this model in conjunction with the discretization method defined in Section \ref{section:method-embedding} that maps the input $x_{t}$ into embeddings such that independent embeddings can be learned to calculate the gates and candidate memory states.
In addition to learning the semantic space, this embedding improves bias by reducing correlations between gate units and the candidate states; this is accomplished by decoupling the two lookup matrices $U_{g}$ and $U_{j}$.
\subsection{Network Architecture}

\begin{figure}
\centering
   {\includegraphics[width=6.8cm]{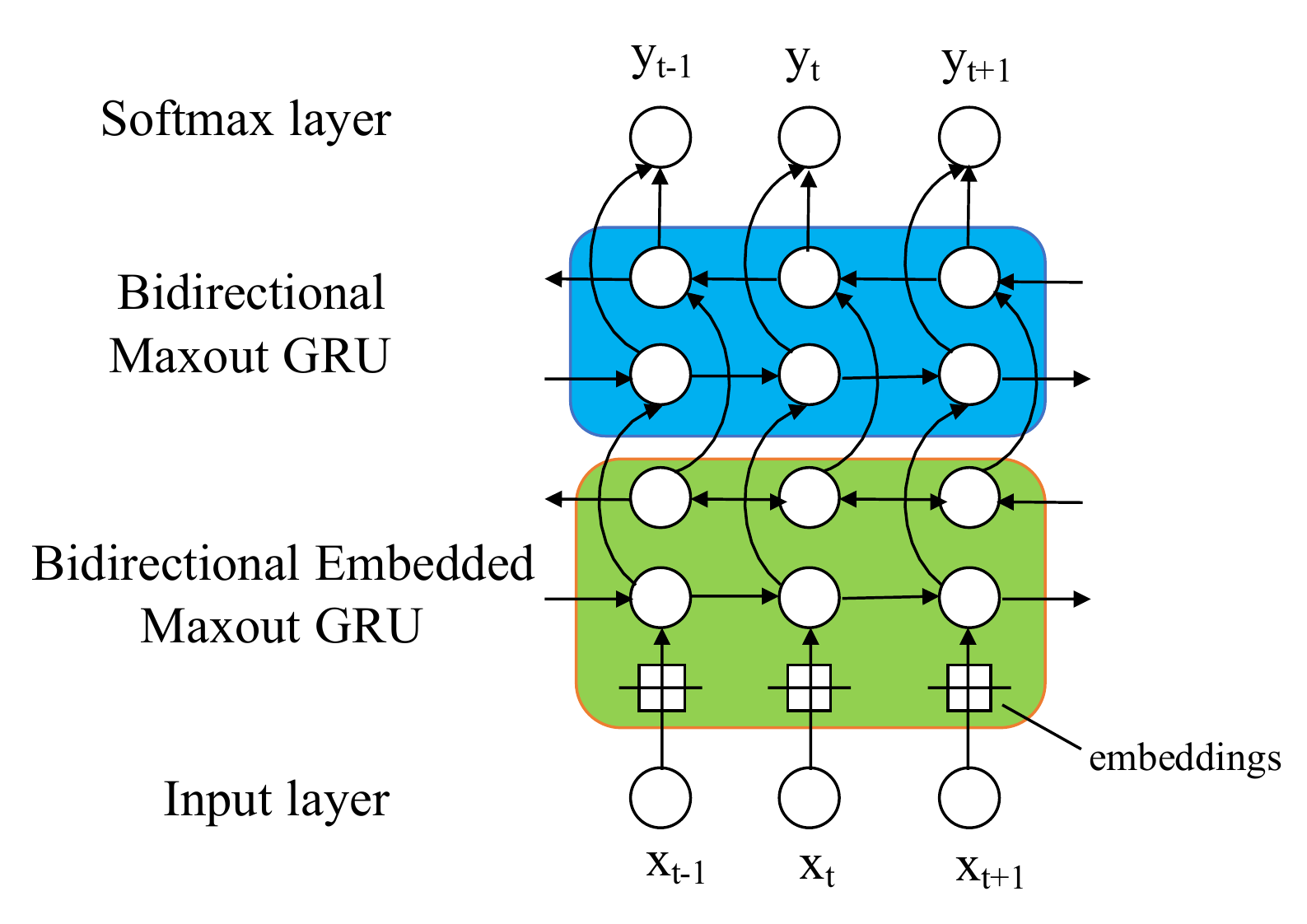} }
    \caption{TrajectoryNet architecture}%
    \label{fig:architecture}%
\end{figure}
Figure \ref{fig:architecture} shows the architecture TrajectoryNet.
The first layer learns the embedding space and the second layer learns feature compositions.
We use bidirectional GRUs instead of unidirectional GRUs to take account into bidirectional flow of information to achieve better predictive power \cite{x6_graves2013speech}.
\section{Experiments}
\label{section:exp}
\subsection{Experiment Settings}
\subsubsection{Data.}
We use the dataset collected by Zheng et al. \cite{x23_zheng2008understanding} and randomly selected 23 people's trajectories with 2,194,281 GPS records.
This dataset provides a large amount of data and a variety of transportation modes.
We focus on the four-class classification task and touch upon the seven-class results at the end.

\subsubsection{Features.}
Two main types of features are used to detect transportation modes: point-based and segment-based features.
Point-based features are associated with each individual GPS record and segment-based features are derived from the segments that aim to provide higher orders of information and regularize the noise-sensitive point-based features.
More specifically, we calculated the following location-and-user agnostic features using raw GPS records: point-based speed $v_{p}$, average speed per segment $v_{avg}$ and standard deviation of speed per segment $v_{sd}$.
However, the quality of features calculated from GPS records can be imprecise due to sensor-related reasons \cite{x625_devogele2012mobility}.
To alleviate this uncertainty, we use Hampel filter \cite{x630_hampel1974influence} to identify and filter outliers in the feature space.

\subsubsection{Network training.}
We use truncated backpropagation through time \cite{x137_werbos1990backpropagation} to optimize the cross entropy loss with Adam optimizer \cite{x8_kingma2014adam} with mini-batches.
The threshold of learning rate is 0.01 and we use validation-based early stopping to improve generalization \cite{x9_bengio2012practical}.
We use uniform initialization in the range of $[0, 0.001]$ \cite{x17_sutskever2013importance}.
We have experimented with different network structures and found that a two-layer structure with 50 hidden nodes in each layer works the best.
Each feature is divided into 20 intervals, the embedding dimension is 50 and the Maxout activation consists of 5 pieces of transformations.

\subsubsection{Evaluation.}
We use Stratified Leave One Batch Out (SLOBO) to evaluate the learned model that divides the data of 23 people into three groups: training, validation and testing set that contain trajectories of 16, 1 and 6 people, respectively.
Because the trajectories from different person may contain varying proportions of transportation modes, we select the three subgroups with the objective of getting similar proportions of transportation modes that are representative of the population.
The stratification reduces variance in the training process and prevents validation-based early stopping from poor generalization.
Because each trajectory has its own intrinsic characteristics, SLOBO also reflects the ability to generalize beyond trajectories of new individuals.

Four methods are selected to measure the model performance: point-based classification accuracy $A_{point}$, distance-based accuracy $A_{distance}$, cross-entropy loss $E_{H}$ and average F1 score $A_{F1}$.
The descriptions are shown in Table \ref{tab:evaluation_measures}.

\begin{table}[H]
\centering
\caption{Description of evaluation measures}
\label{tab:evaluation_measures}
\scalebox{0.8}{
\begin{tabular}{ll}
\toprule
 Measure & Description                                                                                                                             \\ \hline
$A_{point}$                   &  accuracy based on the number of GPS samples                                                                                              \\
  $A_{distance}$                  & \begin{tabular}[c]{@{}l@{}} accuracy based on distances traveled,  for comparison with \cite{x23_zheng2008understanding}\end{tabular} \\
$E_{H}$                   & insights into the model training process (learning curve)                                                                       \\
$A_{F1}$                   & performance measure when different classes are imbalanced                                                                          \\ \bottomrule
\end{tabular}}
\end{table}
\subsection{Results and Discussion}
\subsubsection{Comparison with baseline methods.}
Table \ref{tab-F1} shows the F1 score and accuracy on the test data with distance-based evaluation measures.
The most frequent classification errors are between car and bus as well as walk and bus.
Overall, TrajectoryNet achieves an encouraging 98\% accuracy and outperforms existing baseline methods---including Decision Tree, Support Vector Machine, Naive Bayes, and Conditional Random Field---on the same dataset by a large margin.
The improvement in F1 score over the decision tree based framework proposed by Zheng et al. \cite{x23_zheng2008understanding,x24_zheng2008learning} is 31\%, 16\%, 22\% and 22\% for each class.
This demonstrates the overall effectiveness of the proposed TrajectoryNet.
We further analyze the effect of the individual components of the proposed TrajectoryNet in the rest of this section.
\begin{table*}[]
\centering
\caption{TrajectoryNet performance on the test data}
\label{tab-F1}
\scalebox{1}{
\begin{tabular}{lccccccc}
\toprule
                      & \multicolumn{5}{c}{F1 score\tablefootnote{F1 scores denoted by ``-'' are not available in the referred papers.}}            &  & \multicolumn{1}{c}{\multirow{2}{*}{Accuracy}} \\ \cline{2-6}
                      & bike  & car   & walk  & bus   & average &  & \multicolumn{1}{c}{}                          \\ \hline
TrajectoryNet (ours) & 0.988 & 0.980 & 0.972 & 0.980 & 0.980   &  & 0.979                                         \\
Decision Tree \cite{x23_zheng2008understanding}       & 0.675 & 0.814 & 0.757 & 0.748 & 0.749   &  & 0.762                                         \\
Support Vector Machine \cite{x24_zheng2008learning}       & - & - & - & - & -   &  & 0.462                                         \\
Naive Bayes \cite{x24_zheng2008learning}       & - & - & - & - & -   &  & 0.523                                        \\
Conditional Random Field \cite{x24_zheng2008learning}       & - & - & - & - & -   &  & 0.544                                         \\ \bottomrule
\end{tabular}}
\end{table*}

\subsubsection{The effect of embedding.}
Figure \ref{fig:learning_curve_activations} shows the learning curves of GRUs with and without embedding.
All three embedded GRUs have a better (lower) cross entropy loss $E_{H}$ and converge faster than conventional GRUs: the $E_{H}$ embedded GRU achieves in only two epochs is better than the best $E_{H}$ that GRU achieves in over 70 epochs.
This speedup is because embedding decouples the dependencies between different input values that makes optimization more straightforward.
The embedded GRUs are more stable compared with GRU that suffers from exploding gradients in epoch 20 (ReLU) and 55 (tanh).
This improvement is due to the fact that even a linear decision boundary in the embedding space can be mapped down to a highly nonlinear function in the original space while being easier to optimize.
It reinforces our claim that the representation learned by embedding improves the predictive power of RNNs.

\subsubsection{The role of different activation functions.}
We also highlight the role of different activation functions in Figure \ref{fig:learning_curve_activations}.
We find that Maxout activation functions converge better no matter whether embedding is used.
This demonstrates that Maxout activation can learn more flexible memory states in GRUs.
Note that the fluctuation in the learning curve is a result of Adam stochastic gradient descent with mini-batches.
Whereas the tanh and ReLU suffer from exploding gradients without the use of embedding, Maxout activation does not suffer from exploding gradients as it has better gradient properties---piecewise linear.

\begin{figure}[h]%
\center
    \includegraphics[width=8cm]{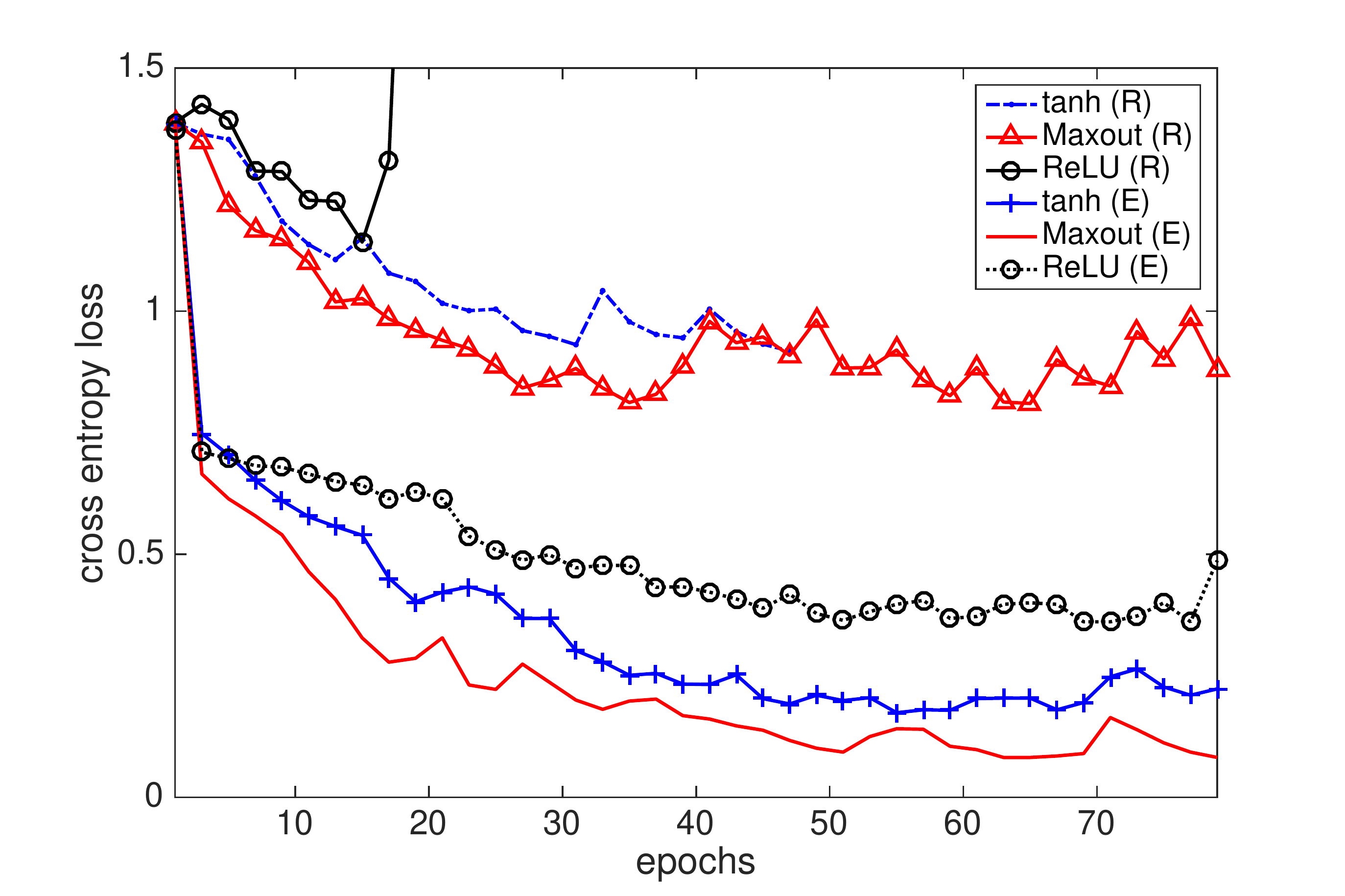} 
    \caption{Learning curves of GRU methods with different activation functions. (R) denotes GRU without embedding and (E) denotes embedded GRU.}%
    \label{fig:learning_curve_activations}
\end{figure}

\subsubsection{The need for segment-based features.}
Table \ref{tab:feature-combination} shows the classification results when using different combinations of features, and demonstrates that the segment-based features can improve the model performance.
This is consistent with our findings about the probability density functions that different features complement each other to provide better separation among various transportation modes.
This also demonstrates that the segment-based features can enrich the point-based feature.
Note that we used the same network architecture throughout these experiments.
Also note that methods marked by $\star$ in the Tables are significantly better than methods without $\star$ at a confidence level of 95\% when performing t-test on $E_{H}$, and the differences are not significant among all methods marked by $\star$ at the 95\% confidence level.

\begin{table}[h]

\centering
\caption{Forward feature selection}
\label{tab:feature-combination}
\begin{tabular}{llll}
\toprule
                  & $E_{H}$        & $A_{point}$   & $A_{F1}$      \\ \hline
$v_{p}$                   & 0.24          & 0.91          & 0.92          \\
$v_{p}, v_{avg}$          & 0.11          & 0.97          & 0.97          \\
$v_{p}, v_{avg}, v_{sd}$$\star$ & 0.08 & 0.98 & 0.98 \\ \bottomrule
\end{tabular}
\end{table}

\subsubsection{The effect of segmentation.}
The selection of segmentation strategy can be expected to affect the model performance.
Table \ref{tab:segmentation} shows results of different segmentation methods.
The distance-based method performs the worst because different transportation modes travel at different speeds that result in a varying amount of samples within each segment.
This further affects the quality of segment-based features as a result of different sampling complexities in different transportation modes.
The rest of the segmentation methods are not statistically different, and the bearing-based segmentation method has the best average performance.
Please refer to \cite{x270_erico_human} for further comparison and discussions.
\begin{table}[h]
\centering
\caption{The effect of segmentation}
\label{tab:segmentation}
\begin{tabular}{llll}
\toprule
        & $E_{H}$ & $A_{point}$ & $A_{F1}$ \\ \hline
time$\star$        &   0.11                     &   0.95                &   0.95                \\
distance     &     0.25                    &     0.86              &      0.86             \\ 
bearing$\star$ & 0.08                  & 0.98             & 0.98             \\
window$\star$     & 0.09                  & 0.97             & 0.97             \\
\bottomrule
\end{tabular}
\end{table}

\subsubsection{The effect of discretization.}
Table \ref{tab-discretization} shows the results of different discretization methods.
It may seem surprising that \emph{equal-width binning} works on a par with entropy-based method, but we find equal-width binning robust, easy to implement and easy to train.
\emph{Fuzzy coding} \cite{x627_roy2003fuzzy} does not work as well as the previous two methods.
Owing to the trapezoidal fuzzy function used in this experiment, the model is forced to learn a weighted sum of two embedding vectors at the same time which makes optimization difficult.
This means given the smoothness prior and proper granularities of the partitions, overlapping interval is not a necessity to learn good models.
As shown in Table \ref{tab:discretization-granularity}, we experimented with various discretization granularities and found that the model was improved significantly when increasing the number of intervals from 10 to 20, but this choice made little difference when the number of intervals is between 20 to 50 for each feature.

\begin{table}[h]
\centering
\caption{The effect of discretization strategies}
\label{tab-discretization}
\begin{tabular}{llll}
\toprule
       & $E_{H}$ & $A_{point}$ & $A_{F1}$ \\ \hline
width$\star$      & 0.08         & 0.98    & 0.98    \\
entropy$\star$          & 0.19         & 0.95     & 0.95    \\
fuzzy            & 0.14         & 0.96    & 0.96    \\ \bottomrule
\end{tabular}
\end{table}
\begin{table}[h]
\centering
\caption{The effect of discretization granularities}
\label{tab:discretization-granularity}
\begin{tabular}{llllll}
\toprule
\# intervals   & 10         & 20$^{\star}$         & 30$^{\star}$         & 40$^{\star}$         & 50$^{\star}$         \\ \hline
$E_{H}$        & $0.168$    & $0.076$    & $0.062$    & $0.068$    & $0.070$    \\
standard error & $0.004$ & $0.013$ & $0.005$ & $0.007$ & $0.012$ \\ \bottomrule
\end{tabular}
\end{table}

\subsubsection{Seven-Class Classification.}
To further validate the effectiveness of the TrajectoryNet, we undertake a more challenging task: classifying GPS records into seven classes, namely train, car, bus, subway, airplane, and bike.
This is a much more challenging task and we achieve 97.3\% point-based classification accuracy and 93.0\% average F1 score, as shown in Table \ref{tab:seven_class}.
Compared with the four-class classification task, the reduction in model performance in mainly due to the insufficient amount of training examples for the two classes: subway and airplane.

\begin{table}[]
\centering
\caption{Confusion matrix to detect 7 transportation modes ($A_{point}=97.3\%$)}
\label{tab:seven_class}
\scalebox{0.72}{
\begin{tabulary}{1.0\textwidth}{lrrrrrrrr}
\toprule
\multirow{2}{*}{Target} & \multicolumn{7}{c}{Prediction}                                                                                                                                                                  & \multicolumn{1}{l}{\multirow{2}{*}{Recall}} \\ \cline{2-8}
                        & \multicolumn{1}{c}{train} & \multicolumn{1}{c}{car} & \multicolumn{1}{c}{walk} & \multicolumn{1}{c}{bus} & \multicolumn{1}{c}{subway} & \multicolumn{1}{c}{airplane} & \multicolumn{1}{c}{bike} & \multicolumn{1}{l}{}                                 \\ \hline
train                   & \textbf{380563}           & 889                     & 853                      & 67                      & 514                        & 1                            & 0                        & 0.994                                                \\
car                     & 83                        & \textbf{66309}          & 261                      & 938                     & 852                        & 0                            & 300                      & 0.965                                                \\
walk                    & 82                        & 260                     & \textbf{139026}          & 2397                    & 784                        & 26                           & 1217                     & 0.967                                                \\
bus                     & 29                        & 298                     & 2920                     & \textbf{121853}         & 76                         & 12                           & 421                      & 0.970                                                \\
subway                  & 1394                      & 1405                    & 6083                     & 142                     & \textbf{24875}             & 10                           & 45                       & 0.733                                                \\
airplane                & 4                         & 0                       & 43                       & 23                      & 0                          & \textbf{1979}                & 13                       & 0.960                                                \\
bike                    & 0                         & 0                       & 95                       & 26                      & 14                         & 0                            & \textbf{64194}           & 0.998                                                \\ \hline
Precision      & 0.996                     & 0.959                   & 0.931                    & 0.971                   & 0.917                      & 0.976                        & 0.970                    &                                                      \\
F1 score       & 0.995                     & 0.962                   & 0.949                    & 0.971                   & 0.815                      & 0.837                        & 0.984                    &                                                      \\ \bottomrule
\end{tabulary}}
\end{table}

\subsubsection{Visualizing classification results.}
Figure \ref{fig:imperfect-label} shows the predictions in the test data.
The transportation modes, bike, car, walk and bus, are colored in purple, red, yellow, and green, respectively.
The misclassified GPS records are colored in black.
This figure shows the overall effectiveness of the TrajectoryNet---only a very small fraction of data are misclassified.
To further investigate the classification errors incurred from the TrajectoryNet, we highlight three scenarios in the data, labeled as A, B and C in the figure.
Scenario A misclassified car into bus and scenario B misclassified bus into car.
These are the most common classification errors from the TrajectoryNet; it is challenging to distinguish these two classes as they sometimes manifest similar features.
Scenario C are hard to notice from the visualization as they only occur during the transition between two transportation modes.
This constitutes another common type of misclassification but only happens in rare occasions implying that the proposed TrajectoryNet is sensitive at detecting transitions between different modes.

\begin{figure*}
  \centering
  \includegraphics[width=0.7\textwidth]{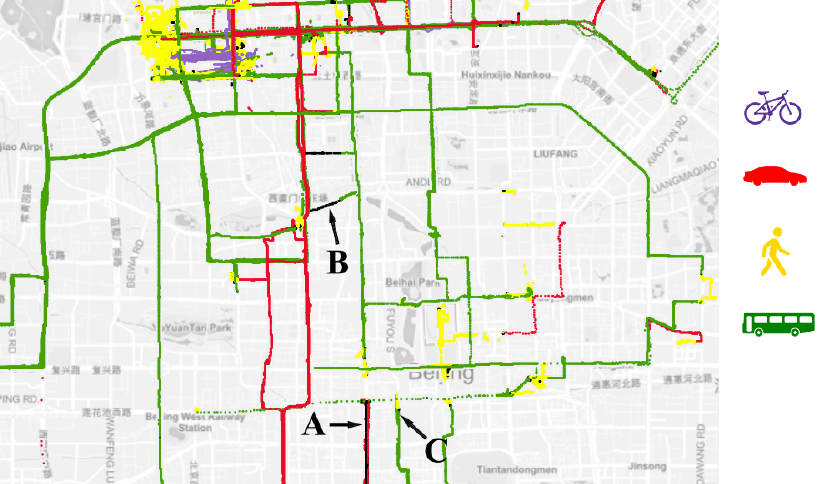}
  \captionof{figure}{Visualizing classification results}
  \label{fig:imperfect-label}

\end{figure*}

\section{Conclusion and Future Work}
\label{section:conclusion}
We propose TrajectoryNet---a neural network architecture for point-based trajectory classification to infer real world human transportation modes from GPS traces.
To overcome the challenge of capturing the semantics of low-dimensional and heterogeneous feature space imposed by GPS data, we develop a novel representation that embeds the original feature space into another space that can be understood as a form of basis expansion.
The embedding can be viewed as a form of basis expansion that improves the predictive power in the original feature space.
The embedding can also be viewed as a mixture of experts that specialize in different areas of the decision boundary with different nonlinear functions.
We also employ segment-based features to enrich the feature space and use Maxout activations to improve the expressive power of RNNs' memory states.
Our experiments demonstrate that the proposed model achieves substantial improvements over the baseline results with over 98\% and 97\% classification accuracy when detecting 4 and 7 types of transportation modes.

As for future work, we consider incorporating location-based prior knowledge such as GIS information, developing online classification systems and building user-dependent profiles to further improve this system.
We also consider applying the proposed embedding method on other types of low-dimensional and heterogeneous time series data, e.g. Internet of things, to further explore the effectiveness of the proposed embedding method.

\begin{acks}
The authors acknowledge the support of the NSERC for this research.
\end{acks}

\bibliographystyle{ACM-Reference-Format}
\bibliography{bib.bib} 

\end{document}